\def\eqref#1{equation~\ref{#1}}
\def\1{\bm{1}}
\DeclareMathAlphabet{\mathsfit}{\encodingdefault}{\sfdefault}{m}{sl}
\SetMathAlphabet{\mathsfit}{bold}{\encodingdefault}{\sfdefault}{bx}{n}
\newtheorem{prop}{Proposition}
\renewcommand{\algorithmicrequire}{\textbf{Input:}}
\renewcommand{\algorithmicensure}{\textbf{Output:}}
\title{Heavy Labels Out! Dataset Distillation with Label Space Lightening}
\author{Ruonan Yu, Songhua Liu, Zigeng Chen, Jingwen Ye, and Xinchao Wang\thanks{Corresponding author.} \\
National University of Singapore \\
\texttt{\{ruonan,songhua.liu,zigeng99\}@u.nus.edu,\{jingweny,xinchao\}@nus.edu.sg} 
}
\begin{document}

\maketitle
\begin{abstract}
Dataset distillation or condensation aims to condense a large-scale training dataset into a much smaller synthetic one such that the training performance of distilled and original sets on neural networks are similar. Although the number of training samples can be reduced substantially, current state-of-the-art methods heavily rely on enormous soft labels to achieve satisfactory performance. As a result, the required storage can be comparable even to original datasets, especially for large-scale ones. 
To solve this problem, instead of storing these heavy labels, we propose a novel label-lightening framework termed \textbf{HeLlO} aiming at effective image-to-label projectors, with which synthetic labels can be directly generated online from synthetic images. 
Specifically, to construct such projectors, we leverage prior knowledge in open-source foundation models, \textit{e.g.}, CLIP, and introduce a LoRA-like fine-tuning strategy to mitigate the gap between pre-trained and target distributions, so that original models for soft-label generation can be distilled into a group of low-rank matrices. 
Moreover, an effective image optimization method is proposed to further mitigate the potential error between the original and distilled label generators. 
Extensive experiments demonstrate that with only about 0.003\% of the original storage required for a complete set of soft labels, we achieve comparable performance to current state-of-the-art dataset distillation methods on large-scale datasets. 
Our code will be available.
\end{abstract}

\section{Introduction}
Dataset distillation~\cite{wang2018dataset} is proposed to deal with the issues caused by large-scale datasets, \textit{e.g.}, high computational overhead for training and heavy burden for storage and transmission. 
It aims to condense a large dataset into a much smaller synthetic one, which preserves the original training performance, so that it can serve as an effective and efficient surrogate to train downstream neural networks. 
For instance, it has been demonstrated that a network trained with merely 1 synthetic image per class (IPC) can perform well on CIFAR-10~\cite{krizhevsky2009learning}. 
However, with such a high compression ratio, it is challenging for the distilled sets to encapsulate the whole knowledge of the original dataset used for training in a very limited space. 
Thus, classic methods in this field like \cite{wang2018dataset,zhao2020dataset,zhao2021datasetdsa,zhao2023datasetdm} still have a significant performance gap between the original set and the synthetic one, especially when handling large-scale datasets~\cite{yu2023dataset}.

To compensate for such dramatic information loss, recent state-of-the-art dataset distillation methods~\cite{shao2024generalized,sun2024diversity,yin2024squeeze} turn to data augmentation, to make the best use of the limited synthetic data. 
Specifically, strategies such as Mixup~\cite{zhang2017mixup} and Cutmix~\cite{yun2019cutmix} are applied in downstream network training, which effectively enhance the performance of distilled datasets and scale dataset distillation up to larger and more complex datasets like ImageNet~\cite{deng2009imagenet}. 

Nevertheless, these recent works heavily rely on soft labels generated by a pre-trained teacher model on the original dataset. 
According to RDED~\cite{sun2024diversity}, networks trained with 10 IPC on ImageNet-1k achieve only 15.2\% accuracy with categorical hard labels, compared to 42.1\% with soft labels. 
Since each augmented sample corresponds to a distinct soft label, as shown in Fig.~\ref{fig:intro}~(left), there are a number of generated soft labels that far exceeds the basic synthetic samples. 
Consequently, storage costs for these soft labels are non-negligible, especially for large-scale datasets with numerous categories. 
For example, on ImageNet-1K with 1 IPC, the required storage for distilled images is $\sim15$ MB, whereas the storage for soft labels exceeds 572 MB—more than 38 times greater. 
Furthermore, with 200 IPC, the storage required for soft labels reaches 110 GB, making it even comparable to the original dataset size. 

To address the issue of such heavy labels, we propose a novel label-lightening framework termed \textbf{He}avy \textbf{L}abe\textbf{l}s \textbf{O}ut, or \textbf{HeLlO} in short. 
Fig.~\ref{fig:intro}~(right) illustrates the overall framework of the proposed HeLlO. 
By creating an effective and lightweight projector from the image to the label space, it reduces the required storage significantly. 
Specifically, we build the projector upon recent foundation models like CLIP~\cite{radford2021learning} that has been pre-trained on massive data and can readily adapt to various target datasets. 
To achieve this, we propose an effective LoRA-like~\cite{hu2021lora} knowledge transfer method that efficiently transforms the original feature space of CLIP into that of the target data. 
As an efficient alternative to the teacher model trained on the target dataset for soft-label generation, the derived low-rank matrices can be seen as a transferable and lightweight representation for the original label space. 

Interestingly, by leveraging the vision-language alignment capability in CLIP~\cite{zhang2022tip}, we propose initializing the projector with the textual representation of label categories, providing a strong starting point that improves training and convergence. 
Moreover, we propose an effective image optimization method to further reduce the potential error between the original and distilled label generators. 
Our extensive experiments show that with only 0.003\% of the original storage for soft labels, we achieve performance comparable to, or even better than, state-of-the-art large-scale dataset distillation methods. 

\begin{figure}[t]
  \centering
   \includegraphics[width=\linewidth]{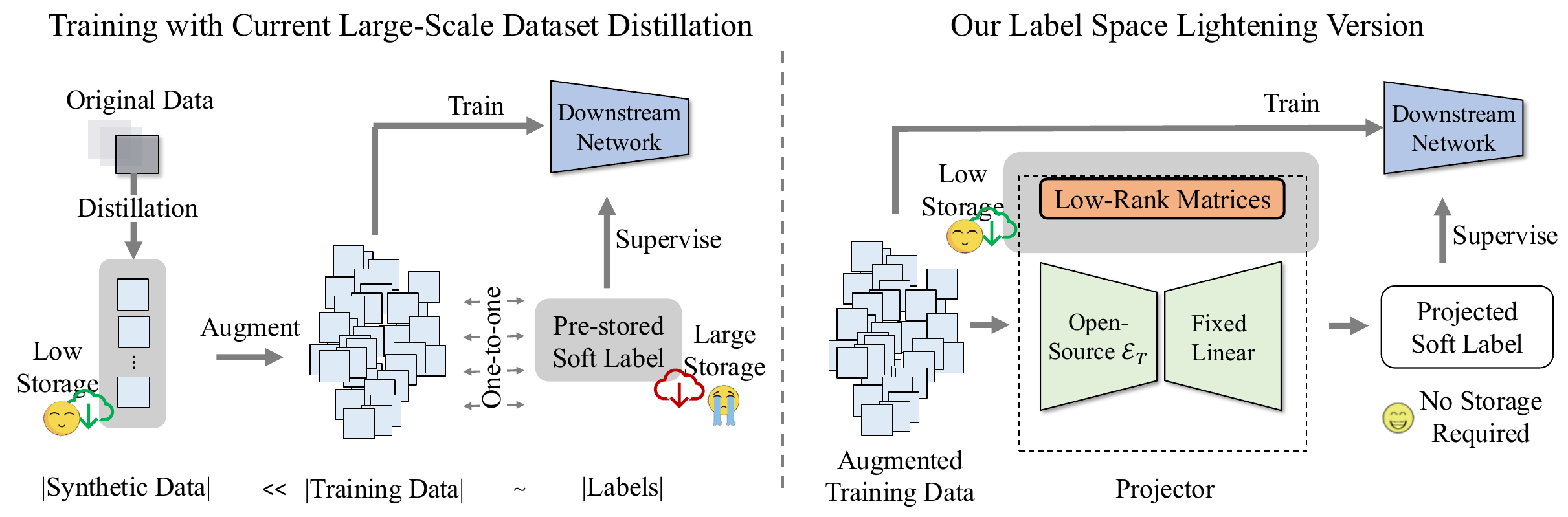}
   \caption{The soft label generation part of the current state-of-the-art large-scale dataset distillation~(left), and our proposed online lightening image-to-label projector framework~(right). For the current state-of-the-art large-scale dataset distillation, for each downstream training epoch, soft labels are generated for each augmented image and stored all the soft labels. For our proposed method, we adopt the open-source foundation models as the base models, which are fixed during the whole training process, and introduce a LoRA-like knowledge transfer method to narrow the gap between the original label space and the target one. We only need to store the low-rank matrices, which significantly reduces the storage costs.}
   \label{fig:intro}
\end{figure}

In summary, our contributions are as follows:
\begin{itemize}
    \item We are the first to focus on the issue of heavy labels in dataset distillation to our best knowledge and propose an effective label-lightening framework termed HeLlO to address the problem. 
    \item By leveraging pre-trained CLIP, the proposed HeLlO method compresses the storage of massive soft labels into a set of lightweight low-rank matrices and tailors an initialization method based on CLIP's textual representation to enhance optimization. 
    \item We introduce an image-level optimization technique that further minimizes the gap between the original and distilled label generators. 
    \item Extensive experiments validate the comparable or even superior performance to state of the arts using just 0.003\% of the storage required for synthetic labels. 
\end{itemize}

\section{Related Works}
Dataset distillation or condensation~\cite{wang2018dataset,zhao2020dataset,yu2023dataset} aims to solve the issues of massive storage, transmission burden, and computational costs for downstream tasks caused by large-scale datasets. Specifically, it condenses the whole knowledge of the original large-scale datasets into a much smaller space and preserves the performance. Based on the optimization objectives, the mainstream dataset distillation methods can be roughly divided into three categories: performance matching~\cite{wang2018dataset,deng2022remember,loo2022efficient,zhou2022dataset,nguyen2020dataset,nguyen2021dataset}, parameter matching~\cite{zhao2020dataset,zhao2021datasetdsa,cazenavette2022dataset,cui2023scaling,du2023minimizing,guo2023towards,liu2022dataset} and distribution matching~\cite{zhao2023datasetdm,wang2022cafe,zhao2023improved,sajedi2023datadam}. 

Traditional dataset distillation methods suffer scaling-up problems due to the bi-level optimization problems, such that the gradients should backpropagate through an unrolled computation graph~\cite{yu2023dataset}. Recent work SRe$^2$L~\cite{yin2024squeeze} proposes a variant distribution matching paradigm to decouple the bi-level optimization and scales up to the full-size ImageNet-1K dataset. It matches the distribution in feature space of the synthetic dataset and the statistical information of the original dataset stored in the batch normalization layers of the pre-trained model. Further, G\_VBSM~\cite{shao2024generalized} utilize multiple pre-trained teachers to provide more statistical information and improve the transferability across different architectures. RDED~\cite{sun2024diversity} is the current state-of-the-art large-scale dataset distillation method, which is based on selection instead of synthesizing. It selects and concatenates the most representative patches evaluated by the pre-trained teacher model.

However, due to the significant reduction in the size of the datasets, an apparent performance gap still exists between the original dataset and the distilled one.
For small-scale dataset distillation, a series of works~\cite{bohdal2020flexible,sucholutsky2021soft,cui2023scaling,deng2022remember,nguyen2021dataset,loo2022efficient,zhou2022dataset} expand the label space by transforming the one-hot labels to soft labels, which  apparently improve the performance for downstream tasks and also provides a new perspective to condense dataset comprehensively. 
However, simply transforming the one-hot label to a soft label for each synthetic image is not effective for large-scale dataset distillation, as the plain soft labels do not provide sufficient extra knowledge for downstream tasks.
In order to solve this issue and compensate for the huge reduction in the number of data, current large-scale dataset distillation methods~\cite{shao2024generalized,sun2024diversity,yin2024squeeze} adopt the extensive data augmentation strategies, \textit{e.g.}, Mixup~\cite{zhang2017mixup} and Cutmix~\cite{yun2019cutmix}, and generate soft labels for each augmented image. 
It will increase the diversity of the distilled data for downstream training, and significantly improve the performance for downstream tasks. However, generating such labels requires restoring huge amount of soft labels, and for large-scale datasets, it will cause non-negligible storage costs. Focusing on this issue, our proposed method only requires 0.003\% storage space while obtaining comparable performance with the state-of-the-art large-scale dataset distillation methods.
\section{Methods}
\subsection{Preliminary}
For the large-scale dataset $\mathcal{T}=(X_t, Y_t)$, where $X_t \in \mathbb{R}^{N_t\times D}$ and $Y_t\in \mathbb{R}^{N_t\times C}$, dataset distillation aims to learn a much smaller dataset $\mathcal{S}=(X_s, Y_s)$, where where $X_s \in \mathbb{R}^{N_s\times D}$ and $Y_s\in \mathbb{R}^{N_s\times C}$, such that the models train on both two datasets can obtain similar performance. Here, $N_t$ and $N_s$ refer to the number of samples in $\mathcal{T}$ and $\mathcal{S}$, $N_t \gg N_s$, and $D$ and $C$ are the dimension of the images and labels, respectively. Current state-of-the-art large-scale dataset distillation methods~\cite{shao2024generalized,sun2024diversity,yin2024squeeze} all follow the real-time teacher(s)-guided soft label generation strategy. It generates a soft label for each augmented image, and the label space is expanded to $\mathbb{R}^{K\times N_s\times C}$, where $K$ is the number of training iterations for downstream tasks. 

Here, the current large-scale dataset distillation methods can be formulated as follows:
\begin{equation}
\begin{split}
    X_s^* &= \mathop{\arg\min}\limits_{X_s} \mathcal{L}(\mathcal{S}, \mathcal{T}), \\
    Y_s^* &= \frac{1}{|\Theta_\mathcal{T}|}\sum f_{\theta\sim\Theta_\mathcal{T}}(\mathcal{A}(X_s^*)),
\end{split}
\end{equation}
where $\mathcal{L}$ is the optimization objectives to update the distilled images, $\Theta_{\mathcal{T}}$ refers to teacher model(s)~($|\Theta_{\mathcal{T}}|\geq 1$), and $\mathcal{A}$ is the augmentation methods. Each augmented distilled image requires generating the corresponding soft labels, which will cause a huge storage burden.

\subsection{Efficient Initialization of Surrogate Projection}
To effectively and efficiently transfer the label space in a lightweight way and easily adapt it to different datasets, we adopt the open-source and pre-trained foundation model, CLIP, as our base model. It does not require extra storage space and can be accessed on demand. Specifically, we adopt the paradigm of linear probe CLIP by utilizing the image encoder part of CLIP and following with a linear transformation. The image encoder of CLIP is pre-trained on numerous paired data and can provide accurate and knowledge-rich features, which makes the linear probe CLIP a powerful classifier.
Here, the parameters required to store is only the linear transformation part. 

However, the storage cost of the linear transformation part depends on the number of classes of the original dataset, which will be non-negligible for large-scale datasets with a large number of classes. Also, there still exists a gap between the original label space and the lightening one, which may make transferring to downstream tasks difficult. Here, to reduce the storage cost for the linear transformation part and improve the ability to transfer, we propose a novel storage-efficient initialization strategy.
Here, given a pre-trained multi-modal foundation model, \textit{e.g.}, CLIP, we denote the image encoder part as $\mathcal{E}_I$ and the text part as $\mathcal{E}_T$. For any dataset $\mathcal{D}=(X,Y)$, we can simply obtain the text descriptions $R=\{r^{(i)}\}_{i=0}^{C-1}$ for the whole dataset by utilizing the vanilla prompt engineering technique~\cite{radford2021learning} with fixed templates. 
We adopt the fixed normalized text embedding of the descriptions for all classes as the initialization of the linear transformation, which significantly saves storage space as we do not need to store the initial parameters. Also, it can improve the basic performance of our label projector as the proposed initialization is equivalent to the pre-trained zero-shot classification. Following we will provide the theoretical analysis.
\begin{prop}
Text embedding initialized linear transformation is equivalent to the pre-trained zero-shot classification.    
\end{prop}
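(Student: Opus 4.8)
The plan is to make the claim precise by writing down both sides explicitly and showing they compute the same logits. First I would fix notation: for an input image $x$, let $\mathcal{E}_I(x) \in \mathbb{R}^d$ denote its (normalized) image embedding, and for each class $i \in \{0,\dots,C-1\}$ let $t^{(i)} = \mathcal{E}_T(r^{(i)}) / \|\mathcal{E}_T(r^{(i)})\|$ be the normalized text embedding of the prompt $r^{(i)}$. The CLIP zero-shot classifier assigns to $x$ the score vector whose $i$-th entry is the cosine similarity $\langle \mathcal{E}_I(x)/\|\mathcal{E}_I(x)\|, \, t^{(i)}\rangle$ (optionally scaled by the CLIP temperature), and then applies a softmax over the $C$ classes. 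On the other side, the linear-probe head is a map $x \mapsto W\,\mathcal{E}_I(x)$ where $W \in \mathbb{R}^{C\times d}$ is the linear transformation, followed by softmax; our initialization sets the $i$-th row of $W$ to be exactly $t^{(i)}$.

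The key step is then a one-line matrix identity: with the initialization $W = [\,t^{(0)};\,t^{(1)};\dots;\,t^{(C-1)}\,]$ (text embeddings stacked as rows), the $i$-th logit produced by the linear head is
\begin{equation}
(W\,\mathcal{E}_I(x))_i = \langle t^{(i)}, \mathcal{E}_I(x)\rangle,
\end{equation}
which is precisely (up to the fixed normalization of $\mathcal{E}_I(x)$ and the common temperature factor) the cosine-similarity score used by zero-shot CLIP. I would note that the normalization of the image feature and the temperature scale are shared across all $C$ classes, hence they are monotone, label-preserving transformations that commute with the softmax in the sense of leaving the $\arg\max$ and the relative ordering unchanged; if one additionally folds the image normalization into $\mathcal{E}_I$ and absorbs the temperature, the two logit vectors coincide identically. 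Concluding, applying softmax to identical logit vectors yields identical class-probability outputs, so the text-embedding-initialized linear transformation and the zero-shot classifier define the same predictor.

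The main obstacle is precision about what ``equivalent'' means, since the literal logits differ by the per-sample image-norm factor and the temperature unless those are explicitly absorbed. I would handle this by stating the proposition in one of two clean forms: either (i) assume both $\mathcal{E}_I(x)$ and the text embeddings are $\ell_2$-normalized and the temperature is set to $1$, in which case the logit vectors are \emph{exactly} equal; or (ii) prove the weaker but still meaningful statement that the two predictors induce the same posterior over classes after softmax because their logits agree up to a class-independent affine reparametrization. Either framing reduces the proposition to the displayed inner-product identity, so the remainder is routine; the only care needed is to keep track of where normalization and temperature enter and to make explicit that they are class-independent.
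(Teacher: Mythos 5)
Your proposal is correct and takes essentially the same route as the paper's proof: stack the normalized text embeddings as the rows (in the paper, columns) of $W$, set the bias to zero, and observe that the linear head's logits are exactly the zero-shot cosine-similarity scores, so the $\arg\max$ decisions coincide---the paper even sidesteps your normalization bookkeeping by defining the linear probe directly on the normalized image feature $v_I$. One small correction to your framing (ii): logits that agree only up to a class-independent \emph{multiplicative} factor (the per-sample image norm or the temperature) generally yield \emph{different} softmax posteriors (the scale acts as an inverse temperature), so only the $\arg\max$/ordering claim survives; since the paper's notion of equivalence, like your framing (i), concerns the classification decision, this does not affect your main argument.
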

\begin{proof}
    For basic zero-shot CLIP prediction, we have:
    \begin{equation}
    \begin{split}
         &c^*=\mathop{\arg\max}\limits_{i\in \{0,\dots, C-1\}}\mathcal{S}im(x,r^{(i)}), \\
         &\mathcal{S}im(x,r^{(i)})=v_I \cdot (v_T^{(i)})^T, where\\
         &v_I=\frac{\mathcal{E}_I(x)}{||\mathcal{E}_I(x)||},v_T^{(i)}=\frac{\mathcal{E}_T(r^{(i)})}{||\mathcal{E}_T(r^{(i)})||},
    \end{split}
    \end{equation}
where $x$ refers to the input image(s), $r_i$ is the text description for class $i$, and $v_I \in \mathbb{R}^{B \times d_f}$ and $v_T \in \mathbb{R}^{C \times d_f}$ refer to the normalized embedding for the input image and the text description for $i^{th}$ class. $B$ is the batch size of the input image(s), and $d_f$ is the dimension of the output embedding.
As for linear probe one, denote the parameters of the linear transformation is $W \in \mathbb{R}^{d_f\times C}$, $W=\{w^{(i)}\}_{i=0}^{C-1}$, and here, numerically, $W=(v_T)^T$ and $w^{(i)}=(v_T^{(i)})^T$. The classification can be formally written as:
\begin{equation}
    c^*=\mathop{\arg\max}\limits_{i\in \{0,\dots, C-1\}}v_I\cdot w^{(i)}+b, where\ w^{(i)}=(v_T^{(i)})^T.
\end{equation}
Here, we set the bias $b$ zero, and these two operations are equivalent.
\end{proof}

\begin{algorithm}[t]
\caption{HeLlO Framework}
\label{framework}
\begin{algorithmic}[1]
    \State \algorithmicrequire Original dataset $\mathcal{T}$, open-source model $\theta$, weak teachers $\Theta_\mathcal{T}$;
    \State \algorithmicensure Synthetic dataset $\mathcal{S}$;
    \State Initialize $\mathcal{S}$ with difficulty evaluation Eq.~\ref{eq:selection}; 
    \State Generate normalized text embedding with text descriptions $R=\{r^{(i)}\}_{i=0}^{C-1}$, $v_T^{(i)}=\frac{\mathcal{E}_T(r^{(i)})}{||\mathcal{E}_T(r^{(i)})||}$;    
    \State Initialize the linear transformation part with normalized text embedding, $W=(v_T)^T$;
    \Repeat
    \State Update incremented parameters $\Delta\theta=A\cdot B$ with low-rank knowledge transfer Eq.~\ref{eq:low-rank};
    \Until{Convergence}
    \Repeat \Comment{Optional}
    \State Update images using Eq.~\ref{eq:update};
    \Until{Convergence}
    \For{$e < K$}\Comment{For online image-to-label projecting during downstream task training}
    \State $Y^{*}=f_\theta(\mathcal{A}(X_s))$;
    \State $\phi^{e}=\phi^{e-1}-\alpha \nabla_\phi (MSE(f_\phi(\mathcal{A}(X_s)), Y^{*})+\beta CE(f_\phi(\mathcal{A}(X_s)), Y_s)$) \Comment{$\mathcal{A}$ is the augmentation method, and $\phi$ refers to the parameters of the student model}
    \EndFor
\end{algorithmic}
\end{algorithm}

\subsection{LoRA-Like Low-Rank Knowledge Transfer}
As mentioned before, we adopt the fixed initialization for the linear transformation part, which will not introduce any extra storage costs and can improve the basic classification ability of the linear probe CLIP. However, there still exists a significant gap between the original label space and the lightening one, which may cause difficulties transferring to downstream tasks. 
Here, one typical way to solve the above issues is fine-tuning the whole projector to the target label space, but it requires huge extra computational costs to train the complex foundation model and non-negligible storage space to save the tuned parameters.

In order to reduce the computational costs and the storage costs, while narrowing the gap and further improving the transferability of the projector to the downstream tasks, we propose a novel parameter-efficient knowledge transfer method. First of all, to minimize the cost of fine-tuning, we follow the idea of LoRA~\cite{hu2021lora}, which decomposes the weight matrix of the foundation models into low-rank matrices. It will preserve the pre-trained knowledge and enhance efficiency by reducing the number of updated parameters. Formally, for specific fine-tuning target $\mathcal{L}$, we have:
\begin{equation}
\begin{split}
    &\theta^*=\mathop{\arg\min}\limits_{\theta}\mathcal{L}(\mathcal{D};\theta),where\\
    &\theta^*=\theta_0+\Delta\theta,\ \Delta\theta=A\cdot B.
\end{split}
\end{equation}
Here, $\mathcal{D}$ refers to the target dataset, $\theta_0\in\mathbb{R}^{d\times k}$ is the initial pre-trained parameters of the model, and $\Delta\theta$ is the incremented weight, which is updated during the fine-tuning procedure. $A\in\mathbb{R}^{d\times r}$ and $B\in\mathbb{R}^{r\times k}$ are the decomposed low-rank matrices of $\Delta\theta$, where $r\ll d$ and $r\ll k$, largely relieving the computational and storage burden. Specifically, we apply LoRA to both the image encoder and the linear transformation parts~(while with different ranks), avoiding fine-tuning the whole model and saving storage space.
Moreover, to further improve the transferability to the downstream tasks, we combine the original LoRA target optimization objective with the multi-teacher knowledge transfer metric as follows:
\begin{equation}
\label{eq:low-rank}
\begin{split}
     \mathcal{L}(\mathcal{D};\theta)=MSE(f_\theta(X), Y^{'})+\lambda CE(f_\theta(x), Y). 
\end{split}
\end{equation}
Here, $\theta$ is the parameters of the projector, $f_\theta(X)=\mathcal{E}_I(X)W$, and $Y^{'}$ refers to the soft labels generated by the weak teachers $\Theta_\mathcal{T}^{'}$, such that $Y^{'}=\frac{1}{|\Theta_\mathcal{T}^{'}|}\sum f_{\theta\sim\Theta_\mathcal{T}^{'}}(X)$. Practically, we adopt the original dataset $\mathcal{T}$ as the target dataset, and weak teachers are from the single training trajectory for easy to obtain and transfer.

\subsection{Synthetic Dataset Initialization and Update}
Here, we follow RDED~\cite{sun2024diversity}, to initialize the distilled dataset $\mathcal{S}$. Specifically, as the image patches can effectively represent object features, they select patches based on their difficulty and concatenate the patches to form an image. Specifically, they adopt the teacher model $\theta_t$ as the observer to evaluate the difficulty of the patches, and the most representative patches will be selected. The selection metric is as follows:
\begin{equation}
\label{eq:selection}
    p^{*}=\mathop{\arg\min}\limits_{p\sim \mathcal{P}}CE(f_{\theta_t}(p), y_p),
\end{equation}
where $\mathcal{P}$ is a bunch of patches random cropped from the images of the original dataset $\mathcal{T}$, and $y_p$ is the corresponding labels of the original image. However, to reduce storage costs, we propose a surrogate parameter-efficient model to replace the original teacher model. This substitution introduced a performance gap, as the observer model is not the projector model for the downstream tasks. To narrow this gap, we further update the synthetic dataset to minimize the information loss of patches on the surrogate projector. Here, we follow LIC~\cite{anonymous2024information} to do the image update, and the adapted optimization metric is as follows:
\begin{equation}
\label{eq:update}
    \mathcal{G}(\mathcal{E}_I, p)=MSE(\mathcal{E}_I(p), \mathcal{E}_I(\hat{p})),
\end{equation}
where $\hat{p}$ is the transformed one with first down-sampled and then up-sampled to the original size. It will further reduce the information loss on the projector, and narrow the performance gap between the observer and the projector.

\subsection{Algorithm Summary}
In summary, we propose a novel label-lightening framework, HeLlO, building an effective and efficient image-to-label projection with lower storage requirements. The framework of HeLlO is shown in Algorithm~\ref{framework}.
Here, we first initialize the synthetic dataset $\mathcal{S}$ with the metric Eq.~\ref{eq:selection}, which selects the most representative patches of the dataset. Then, we initialize the linear transform part using the normalized text embedding, generated by the fixed text descriptions and the pre-trained text encoder without any extra storage space. Following we adopt the LoRA-like knowledge transfer method Eq.~\ref{eq:low-rank} to efficiently fine-tune the projector with weak teachers' guidance, and this step will only cause very low storage costs. As we use the projector to replace the observer model to relabel the images for downstream training, there exists a performance gap. To further narrow this gap and reduce the information loss on the projector model, we adopt Eq.~\ref{eq:update} to update the synthetic data. Lastly, for the downstream training, the synthetic labels can be directly generated online from synthetic images through the projector.

\section{Experiments}
In this section, we conduct extensive experiments to show the effectiveness of our proposed method. Firstly, we compare the performance of our proposed method with the current state-of-the-art large-scale dataset distillation methods. Then we evaluate the cross-architecture generalization ability of our method with various architectures. We also conduct comprehensive ablation studies to show the efficacy of each step of our method and explore the impact of the key factors. Lastly, we also evaluate the performance of our distilled dataset applying to the continual learning task.
\subsection{Experiment Setting}
\subsubsection{Datasets and Networks}
Our proposed method HeLlO aims to solve the heavy-label issue in the large-scale dataset distillation methods. Here, we adopt the ImageNet-100 and ImageNet-1K as the validation datasets to show the efficacy of our proposed method. Both of these two datasets are 224 $\times$ 224 in size. 

As for networks, we adopt CLIP~(ResNet-50) from the official Open-AI as the base model, followed by a linear transformation. For baseline comparison, we follow the prior works~\cite{yin2024squeeze, shao2024generalized,sun2024diversity}, adopting ResNet-18~\cite{he2016deep} as the evaluation model. Also, to show the generalization ability across various architectures of our proposed method, we select ShuffleNet-V2~\cite{ma2018shufflenet}, MobileNet-V2~\cite{sandler2018mobilenetv2}, EfficientNet-B0~\cite{tan2019efficientnet}, Swin-V2-Tiny~\cite{liu2022swin}, and VGG-11~\cite{simonyan2014very} as the evaluation architectures.

\begin{table*}[!t]
  \centering
  \resizebox{\linewidth}{!}{
  \begin{tabular}{c c c c c c}
    \toprule
    \bf Datasets && \bf SRe$^2$L~\cite{yin2024squeeze} & \bf G\_VBSM~\cite{shao2024generalized} & \bf RDED~\cite{sun2024diversity} & \bf Ours \\
    \midrule
    \multirow{5}{*}{\bf ImageNet-100} 
    & \bf 1 & 3.0 $\pm$ 0.3 & - & \underline{8.1 $\pm$ 0.3} & \bf 12.5 $\pm$ 0.2~(+ 4.4) \\
    & \bf 10 & 9.5 $\pm$ 0.4 & - & \underline{36.0 $\pm$ 0.3} & \bf 48.9 $\pm$ 0.1~(+ 12.9)\\
    & \bf 50 & 27.0 $\pm$ 0.4 & - & \underline{61.6 $\pm$ 0.1} & \bf 69.4 $\pm$ 0.1~(+ 7.8)\\
    & \bf \#Params & 10.7M & - & 10.7M & \bf 0.6M~(0.06$\times$)\\ 
    & \bf Size of Labels & \multicolumn{3}{c}{(5.7M, 57.2M, 286.1M)} & \bf (1e-1$\times$, 1e-2$\times$, 2e-3$\times$)\\
    \midrule
    \multirow{5}{*}{\bf ImageNet-1K} 
    & \bf 1 & 0.1 $\pm$ 0.1 & 1.7 $\pm$ 0.1$^*$ & \underline{6.6 $\pm$ 0.2} & \bf 12.9 $\pm$ 0.3~(+ 6.3) \\
    & \bf 10 & 21.3 $\pm$ 0.6 & 31.4 $\pm$ 0.5 & \underline{42.0 $\pm$ 0.1} & \bf 43.7 $\pm$ 0.1~(+ 1.7)\\
    & \bf 50 & 46.8 $\pm$ 0.2 & 51.8 $\pm$ 0.4 & \bf 56.5 $\pm$ 0.1 & \underline{52.2 $\pm$ 0.1~(-)} \\
    & \bf \#Params & 11.1M & 20.8M & 11.1M & \bf 0.8M~(0.07$\times$)\\
    % \cline{2-6}
    & \bf Size of Labels & \multicolumn{3}{c}{(572.2M, 5722.0M, 28610.2M)} & \bf (1e-4$\times$, 1e-5$\times$, 3e-6$\times$)\\
    \bottomrule
  \end{tabular}
  }

  \caption{Comparison with baseline methods. $^*$ indicates the evaluation results reproduced by us, \textbf{bold} refers to the best results and \underline{underline} refers to the second best results. Here, all methods adopt ResNet-18 as the evaluation model. Here, the \#Params refers to the number of parameters of the teacher model(s) adopted during the downstream training, the size of labels refer to the soft labels required to store, for IPC 1, 10, and 50.}
  \label{tab:baseline}
  
\end{table*}
\begin{table*}[!t]
    \centering
    \resizebox{\linewidth}{!}{
    \begin{tabular}{c c c c c c c}
        \toprule
        \textbf{Datasets} && \bf ShuffleNet-V2 & \bf MobileNet-V2 & \bf EfficientNet-B0 & \bf Swin-V2-Tiny & \bf VGG-11 \\
        \midrule
        \multirow{2}{*}{\bf ImageNet-100}
         &\bf RDED & 27.7 $\pm$ 0.6$^*$ & 35.7 $\pm$ 0.3$^*$ & 37.9 $\pm$ 0.1$^*$ & 18.0 $\pm$ 0.1$^*$ & 21.2 $\pm$ 0.4$^*$ \\
         &\bf Ours & \bf 32.7 $\pm$ 0.8~(+ 5.0) & \bf 40.6 $\pm$ 0.8~(+ 4.9) & \bf 47.0 $\pm$ 0.1~(+ 9.1) & \bf 24.2 $\pm$ 0.3~(+ 6.2) & \bf 27.6 $\pm$ 0.1~(+ 6.4)\\
        \midrule
         \multirow{2}{*}{\bf ImageNet-1K}
         &\bf RDED & 23.3 $\pm$ 0.1$^*$ & 34.4 $\pm$ 0.2 & 42.8 $\pm$ 0.5 & 17.8 $\pm$ 0.1 & 22.7 $\pm$ 0.1\\
         &\bf Ours & \bf 26.5 $\pm$ 0.2~(+ 3.2) & \bf 38.1 $\pm$ 0.5~(+ 3.7)& \bf 44.4 $\pm$ 0.2~(+ 1.6) & \bf 29.5 $\pm$ 0.1~(+ 11.7) & \bf 24.2 $\pm$ 0.3~(+ 1.5)\\
        \bottomrule
    \end{tabular}
    }
    \caption{Evaluation results of cross-architecture generalization under the ImageNet-100 and ImageNet-1K with IPC 10 setting. $^*$ indicates the evaluation results reproduced by us.}
    \label{tab:crossarch}
\end{table*}

\subsubsection{Implementation Details}
For surrogate projector training, we first initialize the linear transformation part with text embedding. We use the official prompt engineering templates provided by the CLIP code base to generate the text description and use the text encoder~(from official CLIP with ResNet-50) to generate the text embedding. During the training process, we propose a LoRA-like knowledge transfer method to further improve the transferability of our method to the downstream tasks. Here, we efficiently fine-tune the convolution layer in the image encoder part and the linear transformation part. Specifically, for ImageNet-100, we use rank 8 for the image encoder part, and 64 for the linear transformation part, and for ImageNet-1K, we use rank 8 for the image encoder part, and 128 for the linear transformation part.
We also utilize multi-weak teachers as guidance to generate the soft labels for projector learning. In practice, we train a ResNet-18 model from scratch using the PyTorch official code base and select some checkpoints along the training trajectory. The teachers are in different stages for different IPCs, and we use 9 teachers for projector training. For more implementation details, please refer to the supplementary materials.

\subsection{Results on Baselines}
Our method aims to solve the heavy-label issues in the large-scale dataset distillation methods. Here, we compare our proposed method with prior state-of-the-art large-scale dataset distillation methods, SRe$^2$L~\cite{yin2024squeeze}, G\_VBSM~\cite{shao2024generalized}, and RDED~\cite{sun2024diversity}. Following the experiment setting with previous works and fair comparison, we use the distilled dataset to train several random initialized ResNet-18 from scratch, and the mean and standard deviation of the accuracy on the corresponding real test dataset are reported in Table~\ref{tab:baseline}. From the results, our proposed method only requires very low storage space costs for label generation that can get comparable performance. 
Particularly for the smaller distilled dataset generation~(smaller IPCs or classes), our proposed method demonstrates superior performance, achieving state-of-the-art results that exceed those of previous methods by a remarkable margin of up to 12.9\% under the setting of ImageNet-100 with IPC 10. Moreover, it accomplishes this while simultaneously siginificantly reducing associated storage costs.

\subsection{Results on Cross-Architecture Generalization}
The ability to generalize to different architectures is an important standard to measure the performance of the distilled dataset, which shows the practicality to the downstream tasks. Here, we evaluate the cross-architecture performance of the previous state-of-the-art method RDED and our proposed method on the ImageNet-100 and the ImageNet with IPC 10. We adopt five different architectures ShuffleNet-V2, MobileNet-V2, EfficientNet-B0, Swin-V2-Tiny, and VGG-11. The results are shown in Table~\ref{tab:crossarch}. From the results, our proposed method demonstrates state-of-the-art performance across various architectures. For residual-like architectures~(ShuffleNet-V2, MobileNet-V2, and EfficientNet-B0), and convolutional networks~(VGG-11), our proposed method shows a superior transferability than the previous state-of-the-art method RDED.
Surprisingly, our method demonstrates exceptional transferability on transformer architectures, surpassing the previous state-of-the-art by an impressive margin of 6.2\%~(ImageNet-100) and 11.7\%~(ImageNet-1K) on Swin-V2-Tiny, while only requires very low storage costs for the label generation.

\begin{table*}[!t]
  \centering
  \resizebox{\linewidth}{!}{
  \begin{tabular}{c c c c c c}
    \toprule
    &\bf Probe Linear CLIP & \bf + Multi-Weak-Teacher Guided & \bf + LoRA-Like Knowledge Transfer & \bf + Text-Embedding-Based Init. & \bf + Image Update\\
    \midrule
    \bf Acc. & 28.2 $\pm$ 0.2 & 30.1 $\pm$ 0.1~(+ 1.9) & 43.5 $\pm$ 0.1~(+ 13.4) & 43.6 $\pm$ 0.1~(+ 0.1) & 43.7 $\pm$ 0.1~(+ 0.1) \\
    \bf \#Params & 1.0M & 1.0M~(-) & 1.5M~($\uparrow$ 0.5) & 0.8M~($\downarrow$ 0.7) & 0.8M~(-)\\
    \bottomrule
  \end{tabular}
  }

  \caption{The results of the ablation studies for the effectiveness of each step of our proposed method. From left to right, each step is incremented based on the former one.}
  \label{tab:ablation}
\end{table*}

\begin{table*}[!t]
  \centering
  \resizebox{\linewidth}{!}{
  \begin{tabular}{c c c c c | c c c c c c}
    \toprule
    & \bf 0.3M / 0.4M & \bf 0.6M / 0.8M & \bf 0.9M / 1.2M & \bf 1.2M / 1.6M & \bf 1-41 & \bf 11-51 & \bf 21-61 & \bf 31-71 & \bf 41-81 & \bf 51-91\\
    \midrule
    \bf ImageNet-100 & 45.4 $\pm$ 0.2 & 48.9 $\pm$ 0.1 & 49.7 $\pm$ 0.1 & 49.8 $\pm$ 0.2 & 48.9 $\pm$ 0.1 & 47.5 $\pm$ 0.1 & 45.5 $\pm$ 0.1 & 44.2 $\pm$ 0.1 & 43.9 $\pm$ 0.7 & 41.5 $\pm$ 0.4 \\
    \bf ImageNet-1K & 38.8 $\pm$ 0.1 & 43.7 $\pm$ 0.1 & 44.2 $\pm$ 0.1 & 44.4 $\pm$ 0.1 & 42.3 $\pm$ 0.1 & 43.7 $\pm$ 0.1 & 42.1 $\pm$ 0.1 & 41.2 $\pm$ 0.1 & 40.6 $\pm$ 0.1 & 39.7 $\pm$ 0.3 \\
    \bottomrule
  \end{tabular}
  }
  \caption{The results of the ablation studies for the impact of the different learnable parameters in LoRA-like transfer learning~(left), and the different stages of teachers~(right). The experiments are conducted under the setting of IPC 10 for ImageNet-100 and ImageNet-1K.}
  \label{tab:paramandstage}
\end{table*}

\subsection{Ablation Study}
In this section, we will conduct comprehensive ablation studies to thoroughly evaluate the improvements in performance achieved by our method. Also, we provide a detailed analysis of the impacts of the key factors.
\subsubsection{The Impact of Key Factors}
To validate the effectiveness of our proposed method, we designed a series of ablation experiments to evaluate each component of our method. The results are shown in Table~\ref{tab:ablation}. Here, we start from the plain linear probe CLIP. We directly use the original dataset to train the linear probe CLIP and use it to online generate the labels during the downstream tasks training. As the results shown in Table~\ref{tab:ablation}, it only obtains 28.2\% accuracy, while requiring 1.0M parameters to store. Based on that, we adopt multi-weak teachers to guide the linear probe CLIP training, which gains 1.9\% improvement and maintains the storage costs.
Then, we introduce the LoRA-like knowledge transfer method, which significantly improves the performance of downstream training by 13.4\% but causes an increase in storage. Following we propose the text-embedding-based initialization strategy, such that we do not need to store the whole linear transformation part but the low-rank matrices. It helps largely reduce the storage costs by 0.8M while maintaining the performance. Lastly, we narrow the gap of the original distribution and the target one by updating the images, which improves the performance of the distilled dataset.
\subsubsection{The Impact of Different Rank}
We also explore the impact of ranks of the low-rank matrices in the LoRA-like knowledge transfer part. It also reflects the relation between the number of learnable parameters and the performance. The results are shown in Table~\ref{tab:paramandstage}~(left). From the results, we find that the ranks of the low-rank matrices or the number of learnable parameters can significantly influence the performance of the downstream tasks. However, this effect is pronounced only when the number of learnable parameters is insufficient; once a sufficient level is reached, further increases in learnable parameters do not lead to notable improvements in performance. The inflection point in the results occurs at 0.6M/0.8M for ImageNet-100 and ImageNet-1K. This also indicates that our method is robust to the selection of the ranks; as long as ranks reach a sufficient level, the results remain stable without significant fluctuations.
\subsubsection{The Impact of Different Stages of Teachers}
In our proposed method, we adopt multi-weak teachers to guide the projector training. Here, we explore the impact of the stage of the teachers on the performance of the downstream tasks. Here, the experiments are under the setting of IPC 10 for both ImageNet-100 and ImageNet-1K. The results are shown in Table~\ref{tab:paramandstage}~(right). The results indicate that the stage of the teachers has a particularly significant impact on the performance of the downstream tasks. For smaller IPCs, earlier-stage teachers are more beneficial for transferring to downstream tasks. In contrast, later-stage teachers tend to contain more complex knowledge that is difficult to decouple and learn effectively.

\begin{figure}[t]
  \centering
   \includegraphics[width=0.72\linewidth]{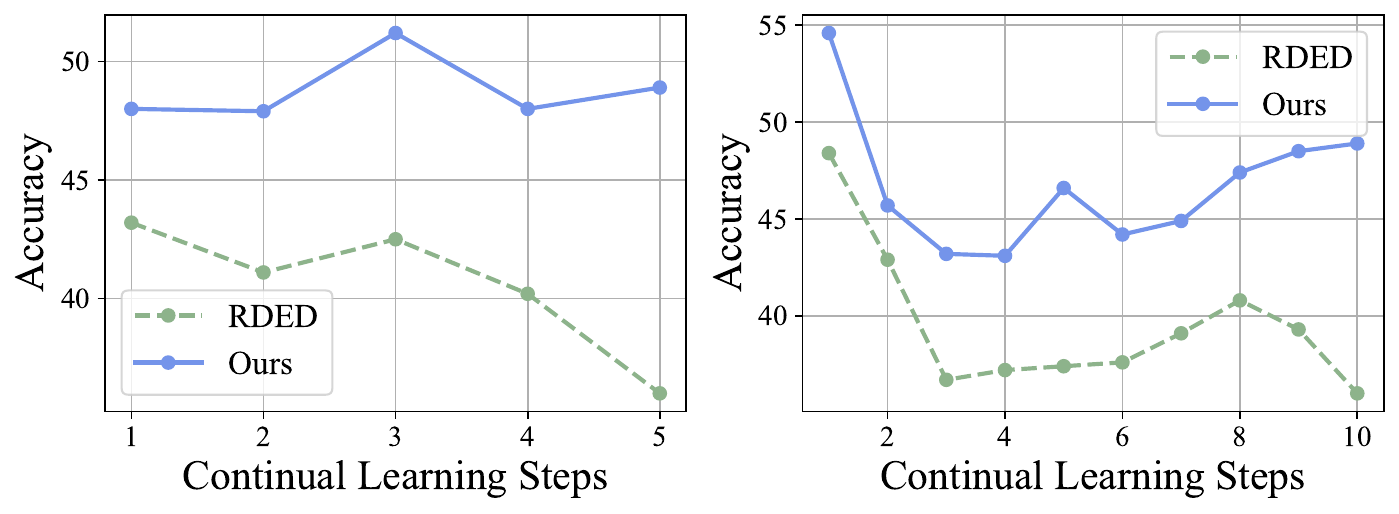}
   \caption{The results on the continual learning for 5-step~(left) and 10-step~(right). All experiments are conducted under the setting of IPC 10 for ImageNet-100.}
   \label{fig:continual}
\end{figure}

\subsection{Results on Continual Learning}
Continual learning~\cite{de2021continual,wang2024comprehensive,rebuffi2017icarl} is an important application for dataset distillation~\cite{yu2023dataset}. Here, for fair comparison, we follow the previous works~\cite{zhao2023datasetdm,yin2024squeeze}, adopting the GDumb~\cite{prabhu2020gdumb} framework to evaluate the performance on continual learning. The experiments are conducted under the setting of ImageNet-100 with IPC 10, and we evaluate both the 5-step and the 10-step settings. The results are shown in Fig.~\ref{fig:continual}. From the results, our proposed method is significantly superior to the previous state-of-the-art method RDED.

\section{Conclusion}
In this paper, we propose a novel label-lightening framework termed HeLlO, aiming to solve the heavy-label issue in large-scale dataset distillation. Our method involves an effective image-to-label projector, with which the synthetic labels can be directly generated online from synthetic images during training downstream networks. Specifically, we leverage the prior knowledge in open-source foundation models and introduce a parameter-efficient LoRA-like fine-tuning method to narrow the gap between the label distribution of the pre-trained and target ones, which improves the transferability of the projector to the downstream tasks as well. Moreover, we propose a text-guided initialization strategy for the projector that enhances training. To further address the gap between the original label generator and the projector, we also develop a strategy to optimize synthetic images within the projector. Extensive experiments demonstrate that the proposed HeLlO achieves performance comparable or even superior to current state-of-the-art dataset distillation techniques while using just about 0.003\% of the original label storage space.

\bibliography{iclr2025_conference}
\bibliographystyle{iclr2025_conference}

% \appendix
% \section{Appendix}
% You may include other additional sections here.

\end{document}